\theoremstyle{plain}
\newtheorem{theorem}{Theorem}[section]
\theoremstyle{definition}
\newtheorem{definition}[theorem]{Definition}
\theoremstyle{remark}
\newtheorem{remark}[theorem]{Remark}
\theoremstyle{plain}
\newtheorem*{thm32}{Theorem 3.2}
\theoremstyle{definition}
\newtheorem*{def31}{Definition 3.1}
\icmltitlerunning{Adaptive Certified Training: Towards Better Accuracy-Robustness Tradeoffs}
\begin{document}

\twocolumn[
\icmltitle{Adaptive Certified Training: \\Towards Better Accuracy-Robustness Tradeoffs}

% It is OKAY to include author information, even for blind
% submissions: the style file will automatically remove it for you
% unless you've provided the [accepted] option to the icml2023
% package.

% List of affiliations: The first argument should be a (short)
% identifier you will use later to specify author affiliations
% Academic affiliations should list Department, University, City, Region, Country
% Industry affiliations should list Company, City, Region, Country

% You can specify symbols, otherwise they are numbered in order.
% Ideally, you should not use this facility. Affiliations will be numbered
% in order of appearance and this is the preferred way.
\icmlsetsymbol{equal}{*}

\begin{icmlauthorlist}
\icmlauthor{Zhakshylyk Nurlanov}{yyy,comp}
\icmlauthor{Frank R. Schmidt}{comp}
\icmlauthor{Florian Bernard}{yyy}
\end{icmlauthorlist}

\icmlaffiliation{yyy}{University of Bonn}
\icmlaffiliation{comp}{Bosch Center for Artificial Intelligence}

\icmlcorrespondingauthor{Zhakshylyk Nurlanov}{zh.nurlanov@uni-bonn.de}

% You may provide any keywords that you
% find helpful for describing your paper; these are used to populate
% the "keywords" metadata in the PDF but will not be shown in the document
\icmlkeywords{Certified training, neural network verification}

\vskip 0.3in
]

% this must go after the closing bracket ] following \twocolumn[ ...

% This command actually creates the footnote in the first column
% listing the affiliations and the copyright notice.
% The command takes one argument, which is text to display at the start of the footnote.
% The \icmlEqualContribution command is standard text for equal contribution.
% Remove it (just {}) if you do not need this facility.

\printAffiliationsAndNotice{}  % leave blank if no need to mention equal contribution
% \printAffiliationsAndNotice{\icmlEqualContribution} % otherwise use the standard text.

\begin{abstract}
  As deep learning models continue to advance and are increasingly utilized in real-world systems, the issue of robustness remains a major challenge. Existing certified training methods produce models that achieve high provable robustness guarantees at certain perturbation levels. However, the main problem of such models is a dramatically low standard accuracy, i.e. accuracy on clean unperturbed data, that makes them impractical. In this work, we consider a more realistic perspective of maximizing the robustness of a model at certain levels of (high) standard accuracy. To this end, we propose a novel certified training method based on a key insight that training with adaptive certified radii helps to improve both the accuracy and robustness of the model, advancing state-of-the-art accuracy-robustness tradeoffs. We demonstrate the effectiveness of the proposed method on MNIST, CIFAR-10, and TinyImageNet datasets. Particularly, on CIFAR-10 and TinyImageNet, our method yields models with up to two times higher robustness, measured as an average certified radius of a test set, at the same levels of standard accuracy compared to baseline approaches.
\end{abstract}

\section{Introduction}
Deep learning models are successfully deployed in real-world applications, related to speech and image recognition, natural language processing, drug discovery, recommendation systems, and bioinformatics~\citep{balas2019handbook}. However, the issue of robustness poses safety concerns in mission-critical tasks, such as autonomous driving~\citep{zhang2021driving}, since the discovery of adversarial examples~\citep{szegedy2014intriguing}. Methods for improving empirical robustness, such as adversarial training~\citep{madry2018towards}, are effective against a set of adversarial attacks but give no robustness guarantees.
Formal verification techniques ~\citep{muller2022third} can provide worst-case deterministic performance guarantees. Nonetheless, to improve the robustness certificates, specialized {certified training} methods \citep{wong2018scaling,zhang2019crownibp,shi2021fast,muller2023sabr} are necessary. Existing certified training approaches improve worst-case certified accuracy at the cost of drastically {reduced standard accuracy} on clean inputs. This limits the adoption of certifiably robust models in practical applications.

In this work, we tackle the problem of reduced standard accuracy of certified training methods and propose a novel approach, \textbf{A}daptive \textbf{Cert}ified {T}raining (ACERT). It is based on the idea that the input points may have different levels of vulnerability. Therefore, by adapting certification intensity to each input sample individually, ACERT optimizes robustness on all levels. As a result, it achieves increased standard accuracy and robustness, measured as an average certified radius of a test set, advancing the state-of-the-art. Moreover, we propose to evaluate models based on their performance across accuracy-robustness curves, therefore we introduce a new metric called Accuracy-Robustness Tradeoff (ART) score, which combines standard accuracy and average certified radius.

\paragraph{Main contributions} Our main contributions are summarized in the following:
\begin{enumerate}
    \item An empirical study of accuracy-robustness tradeoffs of certified training methods with a new metric, ART score (short for Accuracy-Robustness Tradeoff), combining standard accuracy and average certified robustness of models;
    \item A novel certified training method called ACERT that efficiently finds theoretically sound certification strength for each input sample and effectively maximizes it;
    \item Experiments demonstrating improved accuracy-robustness curves of ACERT over previous state-of-the-art methods across different datasets, such as MNIST, CIFAR-10, and TinyImageNet.
\end{enumerate}

\section{Background}
In the following, we denote a deep neural network (DNN) as a parameterized function $f_\theta: \mathcal{X} \to \mathcal{Y}$ with parameters $\theta$, an input space $\mathcal{X}$, and an output space $\mathcal{Y}$.  Consider a classification problem, where $\mathcal{X} \subset \mathbb{R}^{d_{\text{in}}}$, $\mathcal{Y}=\mathbb{R}^c$ and $f_\theta(x) \in \mathbb{R}^c$ predict numerical scores (i.e., unnormalized \emph{ logits}) for each of the $c$ classes. Hard-label classification $h(f_\theta(x))$ is performed by assigning an index of the highest score, that is $h(f_\theta(x)) := \arg\max_i f_\theta(x)_i$. 

\paragraph{Adversarial Robustness}
A {hard-label classifier} $h$ is \emph{adversarially robust} on an $\ell_p$-ball $\mathcal{B}_p(x, \varepsilon)$ around input $x$ if, given a ground-truth target class $y$, it always predicts $y$ for all inputs in   $\mathcal{B}_p(x, \varepsilon)$, or formally,
\begin{align}
\label{eq:adv_robustness}
    h(f_\theta(x')) &\equiv y, \\ \forall x' \in \mathcal{B}_p(x, \varepsilon) &:= \{x' \in \mathcal{X}: ||x-x'||_p \leq \varepsilon\}. \notag
\end{align}

Equivalently, a {soft classifier} $f_\theta$ is \emph{adversarially robust} on an $\ell_p$-ball $\mathcal{B}_p(x, \varepsilon)$ around input $x$, if the maximum value of a margin function $m(x', y; f_\theta):=\max_{i \neq y} f_\theta(x')_i - f_\theta(x')_y$ on  $x' \in \mathcal{B}_p(x, \varepsilon)$ is less than~0, i.e.
\begin{equation}
\label{eq:robust_margin}
    R(x, y; \mathcal{B}_p(x, \varepsilon); f_\theta) := \max_{x' \in \mathcal{B}_p(x, \varepsilon)} m(x', y; f_\theta) < 0.
\end{equation}
A solution to the maximization problem in \eqref{eq:robust_margin} defines a \emph{robust margin} function $R(x, y; \mathcal{B}_p(x, \varepsilon); f_\theta)$, which indicates the robustness of the underlying classifier $f_\theta$. 

In this work, we consider a common case of $p=\infty$ for $\ell_p$ balls, i.e. $\mathcal{B}(x, \varepsilon) = \mathcal{B}_\infty(x, \varepsilon)$.  Furthermore, whenever it is clear from context, we omit extra arguments of the robust margin, i.e.~we write $R(x, \varepsilon)~=~R(x, y; \mathcal{B}_p(x, \varepsilon); f_\theta)$.

\paragraph{Certified Robustness}
Calculating the robust margin function in~\eqref{eq:robust_margin} is a challenging task, as it is in general an NP-hard problem~\citep{katz2017reluplex}. Therefore, to guarantee robustness, it is preferable to calculate an upper bound of the robust margin, called the \emph{certified robust margin} denoted as $R_\text{cert} \geq R$. 
The calculation of the certified robust margin may become significantly easier, depending on the tightness of the upper bounds. The certified robust margin is often computed with the help of the outer bounds of the neural network output, namely 
\begin{equation}
\label{eq:certified_robust_margin}
     R_\text{cert}(x, \varepsilon) := \max_{i \neq y} \overline{f_\theta}(x, \varepsilon)_i - \underline{f_\theta}(x, \varepsilon)_y,
\end{equation}
where 
% \begin{equation}
$
    \underline{f_\theta}(x, \varepsilon) \leq f_\theta(x') \leq \overline{f_\theta}(x, \varepsilon),  \forall x' \in  \mathcal{B}(x, \varepsilon)
    $ represent the outer bounds.
% \end{equation}

With that, the classifier $f_\theta$ is \emph{certifiably robust} on $\mathcal{B}(x, \varepsilon)$ if the upper bound of the robust margin $R_\text{cert}$ is less than 0, i.e.
\begin{equation}
    \label{eq:certified_robustness}
    R(x, \varepsilon) \leq R_\text{cert}(x, \varepsilon) < 0.
\end{equation}

\paragraph{Certified Training}
It can be difficult to successfully verify the robustness of regularly trained neural networks, as they often have low certification rates. To address this issue, specialized training techniques called \emph{certified training}, are used to improve the robustness guarantees of these networks. 

A particularly effective certified training method is based on the interval bound propagation (IBP) technique~\citep{gowal2019ibp}. It propagates the input interval bounds  $\mathcal{B}(x, \varepsilon)$ through the layers of a DNN using interval arithmetic to get the output intervals $(\underline{f_\theta}(x, \varepsilon), \overline{f_\theta}(x, \varepsilon))$ (details of bound propagation in IBP are provided in the Appendix~\ref{appendix:ibp}). Calculating the IBP bounds is comparable to computing a forward pass through an extended network. Therefore, computational efficiency together with favorable optimization properties~\citep{jovanovic2022paradox} allows one to train certifiably robust models by directly optimizing the loss function over the IBP bounds. 

In practice, IBP-based certified training minimizes a differentiable approximation of the certified robust margin function, a cross-entropy loss 
% $\mathcal{L}_\text{CE}$:
\begin{equation}
    \label{eq:certified_training}
    \mathcal{L}_\text{rob}^\text{IBP}(x, y, \varepsilon_t; \theta) := \mathcal{L}_\text{CE}(\hat{f}_\theta(x, y, \varepsilon_t), y),
\end{equation}
where the radius $\varepsilon_t$ is shared across inputs and is heuristically scheduled to increase during training from 0 to $\varepsilon_{\max}$. The worst-case bounds of a DNN's output $\hat{f}_\theta(x, y, \varepsilon_t)$ in~\eqref{eq:certified_training} are formed such that
\begin{equation}\label{eq:worst_case_bounds}
     \hat{f}_\theta(x, y, \varepsilon_t)_i := \begin{cases}
        \overline{f_\theta}(x, \varepsilon_t)_i, & \forall i\neq y, \\
        \underline{f_\theta}(x, \varepsilon_t)_y, &  i=y.
    \end{cases}
\end{equation}

Optimizing the robust loss outlined in~\eqref{eq:certified_training} yields models that can achieve an impressive level of certified accuracies at certain (large) perturbation levels. However, this often leads to a decrease in accuracy on clean inputs (also known as \emph{standard accuracy}) due to overregularization. Balancing robustness and accuracy remains a challenge for these methods.

\section{Method}
In the following, we propose the \textbf{A}daptive \textbf{Cert}ified training method, ACERT, which addresses the issue of low standard accuracy in existing certified training approaches. First, we define the concept of certified robust radius, which is an integral part of our method. Then, we formalize the adaptive certified training, and finally, present a new metric to capture progress in both accuracy and robustness.

\subsection{Certified Robust Radius}
\begin{definition}[Certified robust radius]\label{def:certified_robust_radius}
    A bounded certified robust radius $\mathcal{E}(x) = \mathcal{E}(x,y; f_\theta; p)~\in~[0, \varepsilon_{\max}]$ of an input sample $(x, y) \sim \mathcal{D}$ with respect to the classifier $f_\theta$ is the maximum radius of an $\ell_p$ ball $\mathcal{B}(x, \varepsilon)$, not exceeding the predefined (perceptual) limit $\varepsilon_{\max}$, where the classifier is certifiably robust. Formally,
    \begin{equation}\label{eq:def_certified_radius}
        \mathcal{E}_\text{cert}(x) = \begin{cases}
            0,& \text{if } R_\text{cert}(x, 0) \geq 0,\\
            \varepsilon_{\max}, & \text{if } R_\text{cert}(x, \varepsilon_{\max}) < 0,\\
            \sup\limits_{R_\text{cert}(x, \varepsilon) < 0} \varepsilon, & \text{otherwise}.
        \end{cases}
    \end{equation}
\end{definition}

\begin{theorem}\label{theorem:certified_radius}
    Let $R_\text{cert}(x, \cdot)$ be continuous and strictly monotonically increasing {function}. Then the certified robust radius, defined in Def.~\ref{def:certified_robust_radius}, exists and is uniquely defined as 
    \begin{equation}\label{eq:certified_radius_theorem}
        \mathcal{E}_\text{cert}(x) = \begin{cases}
            0,& \text{if } R_\text{cert}(x, 0) > 0,\\
            \varepsilon_{\max}, & \text{if } R_\text{cert}(x, \varepsilon_{\max}) < 0,\\
            \varepsilon \text{ s.t. } R_\text{cert}(x, \varepsilon) = 0, & \text{otherwise}.
        \end{cases}
    \end{equation}
\end{theorem}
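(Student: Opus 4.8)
The plan is to prove uniqueness and existence of the root separately, and then verify that the supremum-based characterization of Definition~\ref{def:certified_robust_radius} collapses to the root-based one in~\eqref{eq:certified_radius_theorem} case by case, using continuity for existence and strict monotonicity for uniqueness. I would treat the three branches of the theorem in turn, since the first two are essentially immediate and all the work lives in the third.

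First I would record uniqueness. Because $R_\text{cert}(x, \cdot)$ is strictly monotonically increasing on $[0, \varepsilon_{\max}]$, it is injective, so the equation $R_\text{cert}(x, \varepsilon) = 0$ has at most one solution; this makes the value in the third branch well-defined as soon as it exists. For existence in the nontrivial branch, note that the first two branches are direct: if $R_\text{cert}(x, 0) > 0$ then monotonicity gives $R_\text{cert}(x, \varepsilon) > 0$ for all $\varepsilon \geq 0$, so no positive radius is certified and $\mathcal{E}_\text{cert}(x) = 0$; and if $R_\text{cert}(x, \varepsilon_{\max}) < 0$ the ball is certified at the maximal radius. In the remaining ``otherwise'' case we have $R_\text{cert}(x, 0) \leq 0$ and $R_\text{cert}(x, \varepsilon_{\max}) \geq 0$, and since $R_\text{cert}(x, \cdot)$ is continuous, the intermediate value theorem produces an $\varepsilon^* \in [0, \varepsilon_{\max}]$ with $R_\text{cert}(x, \varepsilon^*) = 0$, which is unique by the previous step.

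The crux is to show that this $\varepsilon^*$ coincides with the supremum $\sup\{\varepsilon : R_\text{cert}(x, \varepsilon) < 0\}$ appearing in Definition~\ref{def:certified_robust_radius}. I would set $S := \{\varepsilon \in [0, \varepsilon_{\max}] : R_\text{cert}(x, \varepsilon) < 0\}$ and argue from strict monotonicity that $S$ is downward closed (an interval with left endpoint $0$), so that $\sup S$ is meaningful. Then I would show $R_\text{cert}(x, \sup S) = 0$ by a two-sided argument: approaching $\sup S$ from within $S$ and using continuity gives $R_\text{cert}(x, \sup S) \leq 0$, while supposing $R_\text{cert}(x, \sup S) < 0$ would, by continuity, force a slightly larger point into $S$ (or contradict $R_\text{cert}(x, \varepsilon_{\max}) \geq 0$ when the supremum sits at the right endpoint), contradicting the supremum property; hence also $R_\text{cert}(x, \sup S) \geq 0$. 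By uniqueness this pins $\sup S = \varepsilon^*$.

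The main obstacle is precisely this last pinning step: it is the one place where continuity and strict monotonicity must be combined carefully, rather than invoked separately, to rule out the supremum being an interior point of $S$ or a limit point at which the function fails to vanish. Finally I would check the boundary overlap between the two formulations — the case $R_\text{cert}(x, 0) = 0$, which Definition~\ref{def:certified_robust_radius} places in its first branch and the theorem places in its third — and observe that both assign $\mathcal{E}_\text{cert}(x) = 0$ (the unique root there is $\varepsilon = 0$), so the two characterizations agree everywhere on $[0, \varepsilon_{\max}]$.
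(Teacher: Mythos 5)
Your proof is correct and takes essentially the same route as the paper's own argument: existence via the intermediate value theorem, uniqueness via strict monotonicity, and a case analysis showing that the supremum in Definition~\ref{def:certified_robust_radius} is attained at the unique root, including the boundary case $R_\text{cert}(x,0)=0$ where both characterizations give $0$. Your two-sided continuity argument pinning $\sup S$ at the root is simply a more detailed spelling-out of the step the paper states briefly (that the set where $R_\text{cert}(x,\cdot)<0$ is a half-open interval $[0,\varepsilon^*)$ whose endpoint is the root).
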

The existence and uniqueness of the certified robust radius follow from continuity and strict monotonicity of $R_\text{cert}(x, \cdot)$. Find detailed proofs in the Appendix~\ref{appendix:proofs}. 
Further, whenever it is clear, we omit the extra arguments in the certified robust radius function, i.e. $\mathcal{E}_\text{cert}(x)~=~\mathcal{E}_\text{cert}(x,y;f_\theta,p)$.

% We notice that in real-world the conditions of theorem~\ref{theorem:certified_radius} are usually satisfied.
\begin{remark}\label{remark:ibp_certified_radius}
    We observe that in the regularly initialized classifiers $f_\theta$, interval bound propagation-based certified margin function $R_\text{cert}(x, \cdot)$ in practice satisfies the requirements of Theorem~\ref{theorem:certified_radius}.
\end{remark}
As a result, using IBP for computing the outer bounds of a DNN's output, we can find the certified robust radius of input $x$ as a root of a scalar function $R_\text{cert}(x, \cdot)$, bounded by perceptual limits $[0, \varepsilon_{\max}]$.

\subsection{Adaptive Certified Training}
Employing the definition of certified robust radius from Theorem~\ref{theorem:certified_radius}, we show that it is tractable to directly maximize the radius of each input sample with gradient-based techniques.

\begin{theorem}[Implicit function]\label{theorem:implicit_function}
    Let $R_\text{cert}(\cdot, \cdot)$ be a continuously differentiable function of arguments $(w, \varepsilon)$, and let $(w_0, \varepsilon_0)$ be  a point such that $R_\text{cert}(w_0, \varepsilon_0) = 0$. If $\partial_\varepsilon R_\text{cert}(w_0, \varepsilon_0) \neq 0$,  by implicit function theorem
    % \begin{equation}
    %     \dfrac{\partial \mathcal{E}_\text{cert}}{\partial w}(w) = - \left[ \frac{\partial R_\text{cert}}{\partial \varepsilon} (w, \mathcal{E}_\text{cert}(w)) \right]^{-1} \frac{\partial R_\text{cert}}{\partial w} (w, \mathcal{E}_\text{cert}(w)).
    % \end{equation}
    % \frs{For me, the following is easier to read:}
    \begin{align*}
        \partial_w \mathcal{E}_\text{cert}(w) = - \frac{\partial_w R_\text{cert}(w, \mathcal{E}_\text{cert}(w))}{\partial_\epsilon R_\text{cert}(w, \mathcal{E}_\text{cert}(w))}
    \end{align*}
\end{theorem}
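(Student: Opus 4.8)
The plan is to recognize that this statement is essentially a direct specialization of the classical implicit function theorem to the defining relation of the certified robust radius, so the work is mostly to identify the correct implicit equation and check its hypotheses. First I would invoke Theorem~\ref{theorem:certified_radius}: in the non-degenerate (interior) regime where the certified robust radius lies strictly between $0$ and $\varepsilon_{\max}$, the radius $\mathcal{E}_\text{cert}(w)$ is characterized as the unique root of $R_\text{cert}(w, \cdot)$, which supplies the implicit identity
\begin{equation*}
    R_\text{cert}(w, \mathcal{E}_\text{cert}(w)) = 0
\end{equation*}
valid for $w$ near $w_0$. This links the supremum-based characterization of $\mathcal{E}_\text{cert}$ in Definition~\ref{def:certified_robust_radius} to a smooth-root characterization that is amenable to differentiation.

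Next I would verify the hypotheses of the implicit function theorem at the point $(w_0, \varepsilon_0)$. Continuous differentiability of $R_\text{cert}$ is assumed, the base relation $R_\text{cert}(w_0, \varepsilon_0) = 0$ holds by hypothesis, and the nondegeneracy condition $\partial_\varepsilon R_\text{cert}(w_0, \varepsilon_0) \neq 0$ is given directly (and is in fact consistent with the strict monotonicity of $R_\text{cert}(x, \cdot)$ used in Theorem~\ref{theorem:certified_radius}, which would force $\partial_\varepsilon R_\text{cert} > 0$). The implicit function theorem then guarantees the existence of a continuously differentiable map $w \mapsto \mathcal{E}_\text{cert}(w)$ on a neighborhood of $w_0$ satisfying the identity above with $\mathcal{E}_\text{cert}(w_0) = \varepsilon_0$.

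The final step is to differentiate this identity with respect to $w$ via the chain rule, yielding
\begin{equation*}
    \partial_w R_\text{cert}(w, \mathcal{E}_\text{cert}(w)) + \partial_\varepsilon R_\text{cert}(w, \mathcal{E}_\text{cert}(w)) \, \partial_w \mathcal{E}_\text{cert}(w) = 0,
\end{equation*}
and then solving for $\partial_w \mathcal{E}_\text{cert}(w)$. Division by $\partial_\varepsilon R_\text{cert}$ is legitimate precisely because the nondegeneracy assumption makes the denominator nonzero, and rearranging produces the claimed formula.

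I do not expect a genuine obstacle, since the result is a restatement of a standard theorem; the only points requiring care are bookkeeping ones. Specifically, one must confirm that we are operating in the interior regime of Theorem~\ref{theorem:certified_radius} (neither the radius $0$ nor $\varepsilon_{\max}$ case), so that the smooth-root characterization and its differentiability actually hold. At the two boundary cases the radius is locally constant in $w$, so the formula is either vacuous or reduces trivially to $\partial_w \mathcal{E}_\text{cert}(w) = 0$, and no contradiction arises.
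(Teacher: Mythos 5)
Your proposal is correct and follows exactly the route the paper takes: the paper's own justification of this theorem is a direct appeal to the classical implicit function theorem applied to the identity $R_\text{cert}(w, \mathcal{E}_\text{cert}(w)) = 0$ (which holds in the interior regime by Theorem~\ref{theorem:certified_radius}), followed by chain-rule differentiation and division by the nonzero derivative $\partial_\varepsilon R_\text{cert}$. Your additional bookkeeping --- confirming the interior regime and noting that the clamped boundary cases lie outside the theorem's hypotheses --- is sound and, if anything, slightly more explicit than the paper, which offers no separate proof of this statement in its appendix.
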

\begin{remark}
    Note that the derivative ${\partial_\varepsilon R_\text{cert}(w, \mathcal{E}_\text{cert}(w))}$ is always a positive scalar for a strictly increasing differentiable function, and that the first argument $w$ can represent both input $x$ and parameters $\theta$. Hence, Theorem~\ref{theorem:implicit_function} implies that a gradient step in a direction of maximizing the certified robust radius $\partial_\theta \mathcal{E}_\text{cert}(\theta)$ is a scaled gradient step in a direction of minimizing the certified margin ${\partial_\theta R_\text{cert}(\theta, \mathcal{E}_\text{cert}(\theta))}$.
\end{remark}

Therefore, by minimizing the certified margin at the point of certified robust radius, i.e. $\min_\theta R_\text{cert}(x, \mathcal{E}_\text{cert}(x); \theta)$,  we equivalently maximize the corresponding certified robust radius $\max_\theta \mathcal{E}_\text{cert}(x; \theta)$. In practice, similar to IBP-based certified training~\eqref{eq:certified_training}, we minimize an approximation of the certified margin that is better suitable for gradient-based optimizers, a cross-entropy loss~$\mathcal{L}_\text{CE}$. 

We employ the worst-case output bounds based on IBP as in Eq.~\eqref{eq:worst_case_bounds}, with the key difference that the input bounds $\mathcal{B}(x, \varepsilon)$ have adaptive radii $\varepsilon$, which are equal to certified robust radii for each of the inputs $ \mathcal{E}_\text{cert}(x)$, that is $\hat{f}_\theta(x, y, \varepsilon) \equiv \hat{f}_\theta(x, y, \mathcal{E}_\text{cert}(x))$. Overall, the robust loss of the Adaptive Certified training method is defined as
\begin{equation}
    \label{eq:ACET_loss}
    \mathcal{L}_\text{rob}^\text{ACERT}(x, y; \theta) := \mathcal{L}_\text{CE}(\hat{f}_\theta(x, y, \mathcal{E}_\text{cert}(x)), y),
\end{equation}

To capture the tradeoff between standard accuracy and robustness, we construct a final loss as a convex combination of standard cross entropy loss and a robust loss, both for IBP and ACERT, i.e.
\begin{align}
    \mathcal{L}_\text{stn}(x, y; \theta) &:= \mathcal{L}_\text{CE}(f_\theta(x), y), \label{eq:standard_loss} \\
    \mathcal{L} &= \kappa \mathcal{L}_\text{stn} + (1 - \kappa) \mathcal{L}_\text{rob}. \label{eq:total_loss}
\end{align}
By varying the proportion of standard loss $\kappa \in [0, 1]$, we can get the accuracy-robustness curves of certified training methods (e.g. see in Fig.~\ref{fig:accuracy_robustness_curves}).

\begin{figure*}[htb]
\centering
\begin{tabular}{cc}
\includegraphics[width=0.43\linewidth]{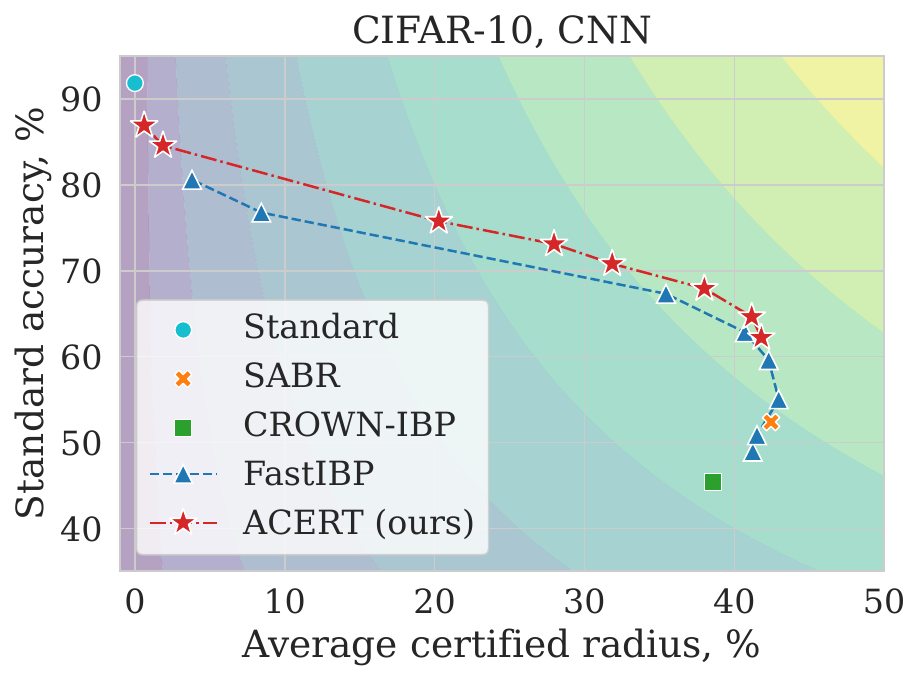} &
\includegraphics[width=0.43\linewidth]{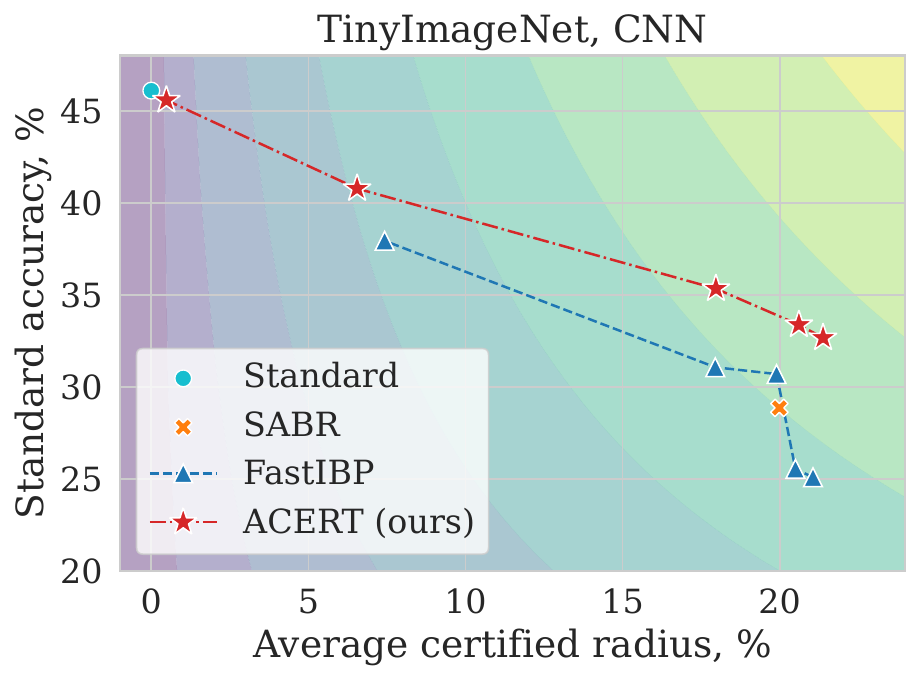} 
\end{tabular}
\caption{Accuracy vs robustness (given as an average certified robust radius of a test set) tradeoffs of certified training methods produced by varying the proportion of standard loss $\kappa \in [0,1]$ in a total loss function~\eqref{eq:total_loss}. The ART score~\eqref{eq:ART}, or geometric mean of accuracy and average certified radius, is visualized with a colormap, where lighter colors represent higher values.}
\label{fig:accuracy_robustness_curves}
\end{figure*}

\subsection{Accuracy-Robustness Tradeoff Score}
In this section, we present metrics that we use for measuring both the accuracy and robustness of models.

The standard accuracy on clean data is the proportion of correctly classified samples (Acc). 
% With a total of $N$ inputs, it reads
% \begin{equation}
% \label{eq:Acc}
%     \text{Acc}((x_i, y_i)_{i=1}^N; f_\theta) := \dfrac{\sum_{i=1}^N I[h(f_\theta(x_i)) = y_i]}{N}.
% \end{equation}

The average robustness of a model is characterized by an average certified robust radius, which is bounded in an interval $[0, \varepsilon_{\max}]$. To compare performance across different perceptual limits~$\varepsilon_{\max}$, we normalize the average certified radius (ACR), that is
\begin{equation}
\label{eq:ACR}
    \text{ACR} := \dfrac{1}{\varepsilon_{\max}} \cdot \dfrac{\sum_i^N \mathcal{E}_\text{cert}(x_i, y_i; f_\theta)}{N}.
\end{equation}
To measure the progress in both standard accuracy and robustness, we propose to use a new metric, ART ({A}ccuracy-{R}obustness {T}radeoff ) that performs a geometric averaging of standard accuracy and average certified radius. Formally,
\begin{equation}
\label{eq:ART}
    \text{ART}(\text{Acc}, \text{ACR}) := \sqrt{\text{Acc} \times \text{ACR}}.
\end{equation}
The main advantage of the geometric mean over other averaging functions is that it is \emph{homogeneous} as a function of each argument. Therefore, changing a scaling factor in the computation of an average certified radius~\eqref{eq:ACR} would only result in a different multiplicative constant of ART, and would not alter the order of compared methods. Formally, 
% \begin{equation}
    $\text{ART}(z_1, c \cdot z_2) = \sqrt{c} \cdot  \text{ART}(z_1, z_2).$
% \end{equation}
Introducing the ART score helps to select models from a Pareto front of the accuracy-robustness curve.

\section{Experiments}
In the following section, we demonstrate the performance of our proposed method, ACERT, and compare it with state-of-the-art certified training methods on various datasets. Furthermore, we investigate the effect of each component of our method in ablation studies.

\paragraph{Experimental Setup} We conduct experiments on three image classification datasets of various difficulties and scales: MNIST~\citep{LeCun2005TheMD}, CIFAR-10~\citep{Krizhevsky2009LearningML} and TinyImageNet~\citep{le2015tiny}.

The baseline method, FastIBP \citep{shi2021fast}, gives strong IBP-based certified robustness requiring relatively short training schedules. We reimplement it based on the authors' code and run experiments in the same setups as our method and for the same values of $\kappa \in [0, 1)$. For CROWN-IBP~\citep{zhang2019crownibp} and a more recent SABR~\citep{muller2023sabr}, we calculate certified robust radii with IBP on the pre-trained checkpoints provided by authors.

The training procedures, such as IBP initialization, regularizers, and training schedules, follow the prior work, FastIBP~\citep{shi2021fast}. The \texttt{CNN7} architecture is used across datasets, similar to~\citep{shi2021fast,muller2023sabr}. For our method, in the initial stages of training, we schedule the maximally allowed certified radius $\varepsilon_{\max, t}$ to increase from 0~to~$\varepsilon_{\max}$. The verification of all methods is performed by the efficient IBP technique since considered models are trained using the IBP bounds. 
Find more details on hyperparameters, hardware, training times, and reproducibility in the Appendix~\ref{appendix:experimental_details}. We will release the code together with the paper publication.

We implement an efficient vectorized GPU-accelerated version of the   BrentQ~\citep{brent1973algorithmsfor} algorithm to find adaptive certified radii~\eqref{eq:certified_radius_theorem} as roots of a batch of scalar functions $(R_\text{cert}(x^1, \cdot), \ldots, R_\text{cert}(x^b, \cdot))$. The BrentQ algorithm guarantees convergence with any predefined level of error tolerance, and it is generally considered a powerful root finding routine~\citep{press2007numerical}. In practice, it usually requires 2 to 4 iterations to achieve solutions with absolute error tolerance \texttt{xtol=1e-6} and relative tolerance \texttt{rtol=1e-4}. Therefore, this routine can be used during the training of a neural network, allowing one to efficiently compute certified radii with high precision. A more detailed analysis of the root finding algorithm is given in Section~\ref{subsec:root_finder}.

\subsection{Main Results}
\label{subsec:main_results}

We compare ACERT to state-of-the-art certified training methods in terms of standard accuracy, average certified robust radius on a test dataset, and a combination of those, ART score, in Fig.~\ref{fig:accuracy_robustness_curves} and Table~\ref{tab:main_results}.  
Based on the accuracy-robustness curves presented in Fig.~\ref{fig:accuracy_robustness_curves}, it can be inferred that the proposed method, ACERT, demonstrates higher levels of robustness at the same levels of accuracy (Fig.~\ref{fig:accuracy_robustness_curves}) comparing to baseline.
For instance, at a fixed level of standard accuracy of 75\% on CIFAR-10, ACERT provides twice ($\times 2$) of the average certified radius given by FastIBP (18\% vs 9\%). Similarly, ACERT reaches around twice ($\times 2$) of the robustness level of the baseline method on TinyImageNet (14\% vs 7\%) for a standard accuracy of $38\%$. Additionally, we observe in Fig.~\ref{fig:accuracy_robustness_curves} that the accuracy-robustness curves of ACERT consistently get close to the regularly trained network's point with high accuracy, while the baseline fails to reach those levels. This indicates that adaptive training successfully eliminates unnecessary overregularization, which is present in certified training.
Overall, the accuracy-robustness curves of ACERT dominate those of the baseline across datasets, resulting in superior Pareto fronts.

For its part, the baseline method, FastIBP, exhibits accuracy and robustness levels similar to those of SABR, provided that $\kappa$ (the proportion of standard loss) is tuned and the same verification technique is utilized, and it outperforms CROWN-IBP. So analyzing the performance of our method in comparison to FastIBP gives a fair judgment.

\begin{table*}[tb]
\caption{Comparison of standard accuracy ($\text{Acc}$), average certified radius ($\text{ACR}$) and accuracy-robustness tradeoff score ($\text{ART}$) for certified training methods on MNIST, CIFAR-10, TinyImageNet test sets. The \texttt{CNN7} architecture is used for all datasets. The certificates are obtained using IBP.}
\label{tab:main_results}
\centering
\begin{tabular}{@{}llllllll@{}}
\toprule
Dataset & $\varepsilon_{\max}$ & Method  & $\kappa$ &  Acc   &  ACR   &   ART $\uparrow$\\
\midrule  
\multirow{5}*{MNIST} &  \multirow{5}*{0.4}  &    Standard & 1.0 &  99.24 &  0.001   &   0.35    \\
&   &     CROWN-IBP & 0.0  & 98.20 &     95.13   &   96.65   \\
&   &     SABR & 0.0  & 98.75 &    87.27   &   92.83   \\
&   &     FastIBP & 0.0  &   97.61  &   94.69   &   96.13   \\
&   &     {ACERT} (ours) & 0.0  &   {98.42}   &   {95.01}   &   \textbf{96.69} \\
\midrule
\multirow{5}*{CIFAR-10} &  \multirow{5}*{$\dfrac{8.8}{255}$}  &   Standard  &   1.0     & 91.84 &   0.0002   &   0.14     \\
&   &     CROWN-IBP & 0.0 & 45.40 &   38.59   &   41.84   \\
&   &     SABR & 0.0 &  52.38 &   42.45   &   {47.14}  \\
&   &     FastIBP & 0.0 &    48.94   &   41.23   &   44.91   \\
&   &     FastIBP$^\dagger$ & 0.8 &    62.84   &   40.70   &   50.49   \\
&   &     ACERT (ours) & 0.0 &    {62.21}   &   41.80   &  {50.96}   \\
&   &     ACERT$^\dagger$ (ours) & 0.25 &    {64.63}   &   41.20   &  \textbf{51.58}   \\
\midrule
\multirow{4}*{TinyImageNet} &  \multirow{4}*{$\dfrac{1}{255}$}  &   Standard & 1.0 & 46.09  &   0.0004   &   0.13     \\
% &   &     CROWN-IBP & 0.0 & 27.82  &   -   &   -   \\
&   &     SABR & 0.0 & 28.85 &   19.99   &   23.96    \\
&   &     FastIBP & 0.0  &   25.08  &   21.06   &   22.96    \\
&   &     FastIBP$^\dagger$ & 0.85  &   30.67  &   19.90   &   24.63    \\
&   &     ACERT$^\dagger$ (ours) & 0.0   &   {32.66}   &   {21.38}   &  \textbf{26.34}     \\
\bottomrule
\end{tabular}

$^\dagger$denotes results with the highest ART score among different levels of $\kappa \in [0, 1]$ from Eq.~\eqref{eq:total_loss}.
\end{table*}

To quantitatively evaluate the certified training methods, we compare the ART (Accuracy-Robustness Tradeoff) scores in Table~\ref{tab:main_results}. We provide results for $\kappa=0$ for all methods since it is a common setup in previous works. Moreover, for the baseline FastIBP and our method, we tune the hyperparameter $\kappa \in [0, 1]$ to get the highest ART score (except for the MNIST dataset, where standard accuracies are already close to the highest possible one).  Overall, ACERT outperforms previous state-of-the-art certified training methods with and without tuning of $\kappa$ in terms of ART score. The effect of adaptive certified training is more prominent on a challenging TinyImageNet dataset, where it achieves both higher standard accuracy and average certified robust radius at the same time.

\begin{table}[tb]
\caption{Performance of certified training methods without $\varepsilon$ schedulers (denoted as $\varepsilon_t$) on CIFAR-10,  $\varepsilon_\text{max}~=~\nicefrac{8.8}{255}$. ACERT has the lowest drop in ART score, given as $\Delta$ ART, compared to default training.}
\label{tab:epsilon_scheduler}
\centering
\begin{tabular}{@{}llclllll@{}}
\toprule
Method  & $\kappa$ &  \makecell{$\varepsilon_t$} &  Acc   &  ACR   &   ART & $\Delta$~ART\\
\midrule  
 
% FastIBP & 0.0 & $\checkmark$ &  48.94 &  41.23   &   44.91    \\
FastIBP & 0.0 & $\times$ &  33.44 &  29.64   &   31.46 & -13.45    \\
% FastIBP$^\dagger$ & 0.8 & $\checkmark$ &   62.84   &   40.70   &   50.49     \\
FastIBP & 0.8 & $\times$ &  52.10 &  37.10   &   43.97 & -6.52    \\
% ACERT (ours) & 0.0 & $\checkmark$ & 62.21 &  41.80    &   50.96    \\
ACERT & 0.0 & $\times$ &  52.65 &  38.95   &   45.32   & \textbf{-5.64}  \\
% ACERT$^\dagger$ (ours) & 0.25 & $\checkmark$  &  {64.63}   &   41.20   &  \textbf{51.58} \\
ACERT & 0.25 & $\times$  &  {54.41}   &   37.96   &  \textbf{45.43} & {-6.15} \\
\bottomrule
\end{tabular}
\end{table}

\subsection{Ablation Studies}
\subsubsection{The effect of the $\varepsilon$ scheduler}

Since certified training methods rely on a scheduler for certification radius, i.e. $\varepsilon_t$, during a warm-up stage of training, we investigate the effect of the scheduler on the final performance. The proposed method, ACERT, relies on a scheduler of individual maximally allowed perturbation radius, i.e. $\varepsilon_{\text{max}, t}$.

Table~\ref{tab:epsilon_scheduler} shows performances of models trained without $\varepsilon$ schedulers on CIFAR-10 dataset, where $\varepsilon_\text{max}$ is set to $\nicefrac{8.8}{255}$.
We observe that all training methods have a drop in accuracy and robustness, reflected in the $\Delta$ ART column, compared to the default training with corresponding $\varepsilon$ schedulers. Without the schedulers, ACERT has the highest ART score and the lowest drop in performance ($\Delta$ ART), compared to baseline models.
It is also notable that the performance of baseline certified training, FastIBP, improves both in terms of accuracy and robustness when the standard loss with the weight of $\kappa=0.8$ is introduced. 

Even though the proposed adaptive certified training internally assigns certification strength $\varepsilon_i$ to each input $x_i$ (providing a theoretically defined automatic scheduler), in the early stages of training the model might get overregularized whenever it processes inputs with high certified radii, limiting the final performance. Therefore, it is reasonable to cap the maximally allowed perturbation radius during a warm-up stage.

\begin{figure}[tb]
\centering
\includegraphics[width=0.93\linewidth]{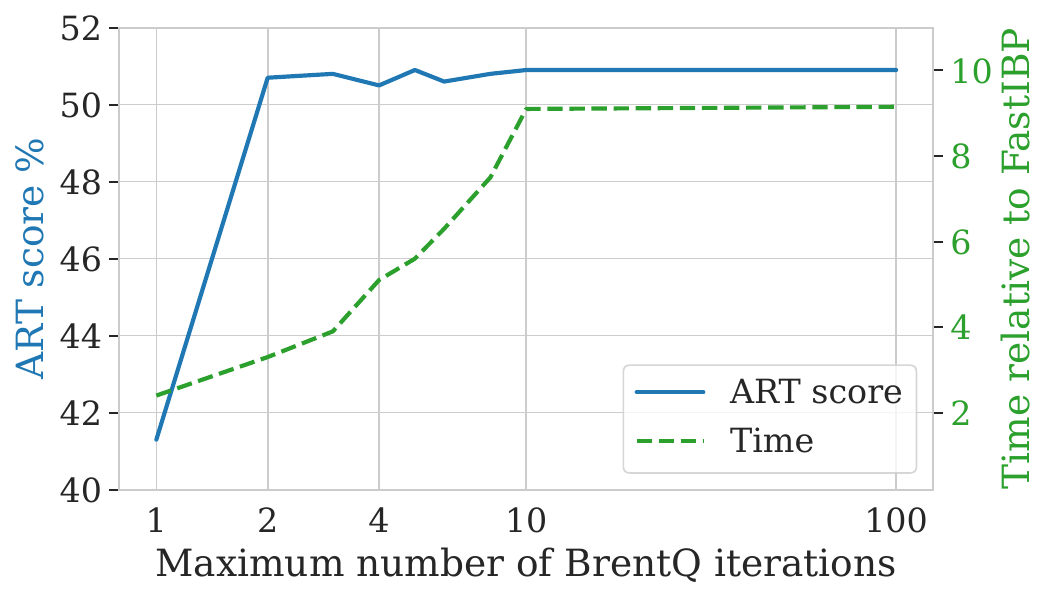}
\caption{Effect of the root finding algorithm's precision (given as a maximum number of iterations) on computation time (relative to FastIBP) and final ART score of ACERT on CIFAR-10 dataset.}
\label{fig:brentq_time_vs_iter}
\end{figure}

\subsubsection{The effect of the root finder}
\label{subsec:root_finder}
Since ACERT finds certified robust radii as roots of a scalar function $R_\text{cert}(x, \cdot)$, several additional certifications (forward passes for IBP) are required. In Fig.~\ref{fig:brentq_time_vs_iter}, we study how the precision of a root-finding algorithm, BrentQ, affects its computational efficiency and overall training performance. We train ACERT models with varying maximum numbers of BrentQ iterations, which limits the precision of the algorithm. Surprisingly, we notice that already 2 iterations of the BrentQ routine allow us to determine roots of $R_\text{cert}(x, \cdot)$ with sufficient precision to reach high ART scores. Since it runs 3 additional certifications (2 without gradients and 1 for loss computation), it is around 3.3 times slower than FastIBP's training step. Further optimization of the root finding algorithm's efficiency, such as saving and using history from previous epochs, can additionally reduce training times.

\section{Related Work}
\paragraph{Adversarial and Certified Defences}
In recent years, there has been significant research interest in developing robust training methods to enhance the resilience of machine learning models against adversarial attacks. Empirical methods~\citep{szegedy2014intriguing,madry2018towards} achieve strong robustness against various attack methods. However, adversarial training often suffers from high computational costs and can be limited in its ability to provide theoretical guarantees. To address the limitations of adversarial training, researchers have explored certified defenses that provide provable guarantees. These defenses encompass various approaches, including methods based on semidefinite relaxations~\citep{raghunathan2018certified}, linear relaxations~\citep{zhang2018efficient,xu2020automatic}, dual problems~\citep{wong2018provable,wong2018scaling,dvijotham18dual}, interval bound propagation~\citep{mirman2018differentiable,gowal2019ibp,shi2021fast} and combinations of approaches~\citep{zhang2019crownibp,balunovic2020adversarial,de2022ibp-r,muller2023sabr}. Models, trained using the certified defenses, greatly improve verified robustness, but drastically reduce standard accuracy. \citet{mueller2021certify} addresses this issue by proposing a compositional architecture of highly accurate and robust models with a selection mechanism. In our work, we focus on training a single model that achieves optimal levels of both accuracy and robustness, which can potentially be employed by compositional methods.

% \paragraph{Robust Architectures}
% Another line of defense is the certification by construction. \citet{zhang2021towards,zhang2021boosting} propose $\ell_\infty$ neurons, which are building blocks of 1-Lipschitz networks. \citet{zhang2022rethinking} extended them to SortNet. These architectures certify robustness without additional computation, however, they introduce extra regularization by construction, limiting their expressiveness. On the other hand, \citet{cohen2019certified,salman2019provably} study a different notion of certificates based on randomized smoothing that gives probabilistic robustness guarantees with higher standard accuracies. These techniques allow constructing smoothed classifiers on top of any other (potentially pre-trained) models~\citep{carlini2023certified} but do so at the cost of significantly increased runtime during inference (requiring $\sim10^5$ iterations per certification).

\paragraph{Adaptive Training} The idea of adaptive adversarial training is showing promising results for empirical defenses. \citet{Ding2020MMA, zhang2021geometryaware,cheng2022cat} propose modifications to adversarial training. \citet{xu2023exploring} motivate adaptive strategy by studying the dynamics of the decision boundary. While these methods find adaptive perturbation magnitudes heuristically (e.g. by an early stop in PGD, a fixed step of line search, or a boundary attack), our method provides theoretically sound definition {and} an efficient implementation for finding the exact values. Therefore, connecting theoretical formulations with practical execution allows further research tied to the observed phenomena.

\section{Conclusion}
We proposed a new method called ACERT that improves the accuracy-robustness tradeoffs of the existing certified training methods. It is based on the key insight that adaptive certification of $\ell_p$ balls helps to reduce overregularization and improves both standard accuracy and robustness at various perturbation levels. Moreover, we provide a theoretically sound formulation of the certified radius with an efficient GPU-accelerated exact solver for finding it. We introduce a new metric called ART (short for Accuracy-Robustness Tradeoff) that serves as a useful proxy for finding models from the accuracy-robustness Pareto front. Finally, our work establishes a new perspective on optimizing certified robust models, that encourages the development of more robust {and} accurate machine learning models suitable for deployment in real-world scenarios.

\bibliography{main}
\bibliographystyle{icml2023}

%%%%%%%%%%%%%%%%%%%%%%%%%%%%%%%%%%%%%%%%%%%%%%%%%%%%%%%%%%%%%%%%%%%%%%%%%%%%%%%
%%%%%%%%%%%%%%%%%%%%%%%%%%%%%%%%%%%%%%%%%%%%%%%%%%%%%%%%%%%%%%%%%%%%%%%%%%%%%%%
% APPENDIX
%%%%%%%%%%%%%%%%%%%%%%%%%%%%%%%%%%%%%%%%%%%%%%%%%%%%%%%%%%%%%%%%%%%%%%%%%%%%%%%
%%%%%%%%%%%%%%%%%%%%%%%%%%%%%%%%%%%%%%%%%%%%%%%%%%%%%%%%%%%%%%%%%%%%%%%%%%%%%%%
\newpage
\appendix
\onecolumn

% \title{Appendix}
% \maketitle
%%%%%%%%%%%%%%%%%%%%%%%%%%%%%%%%%%%%%%%%%%%%%%%%%%%%%%%%%%%%%%%%%%%%%%%%%%%%%%%
%%%%%%%%%%%%%%%%%%%%%%%%%%%%%%%%%%%%%%%%%%%%%%%%%%%%%%%%%%%%%%%%%%%%%%%%%%%%%%%

\section{Theoretical Results}
In the following, we provide background for the certification method used, Interval Bound Propagation~\citep{gowal2019ibp}, and give proofs of the theorems from the main paper.

We assume an $L$-layer feed-forward neural network $f_\theta$ for simplicity, denoting $z^{(L)}~=~f_\theta(x)$ and $u^{(0)}~=~x$: 
\begin{align}\label{eq:feed_forward}
    z^{(\ell)} &= W^{(\ell)} u^{(\ell-1)} + b^{(\ell)},  \forall \ell \in \{1, \ldots, L\}  \\
    u^{(\ell)} &= \sigma^{(\ell)}(z^{(\ell)}), \quad \forall \ell \in \{1, \ldots, L-1\}   \label{eq:feed_forward2}
\end{align}
where $\theta=\{W^{(\ell)}, b^{(\ell)}\}_{\ell=1}^L$ is a set of affine parameters, $\sigma^{(\ell)}$ are element-wise continuous monotonic non-decreasing activation functions. We use $u$ to represent post-activation neurons and $z$ to represent pre-activation neurons. For $u^{(\ell)}$ and $z^{(\ell)}$, we define element-wise upper and lower bounds as $\underline{u}^{(\ell)} \leq u^{(\ell)} \leq \overline{u}^{(\ell)}$ and $\underline{z}^{(\ell)} \leq z^{(\ell)} \leq \overline{z}^{(\ell)}$.

\subsection{Interval Bound Propagation}\label{appendix:ibp}
Consider input bounds $\underline{u}^{(0)} \leq u^{(0)} \leq \overline{u}^{(0)}$, where $u^{(0)}=x$, $\underline{u}^{(0)}=\underline{u}^{(0)}(x, \varepsilon)$,  $\overline{u}^{(0)}=\overline{u}^{(0)}(x, \varepsilon)$. Interval bound propagation (IBP) uses simple rules to propagate bounds in a forward manner. For affine layers, it reads:
\begin{align}
    \overline{z}^{(\ell)} = W^{(\ell)} \frac{\overline{u}^{(\ell-1)} + \underline{u}^{(\ell-1)}}{2} + \abs{W^{(\ell)}} \frac{\overline{u}^{(\ell-1)} - \underline{u}^{(\ell-1)}}{2} + b^{(\ell)}, \\
    \underline{z}^{(\ell)} = W^{(\ell)} \frac{\overline{u}^{(\ell-1)} + \underline{u}^{(\ell-1)}}{2} - \abs{W^{(\ell)}} \frac{\overline{u}^{(\ell-1)} - \underline{u}^{(\ell-1)}}{2} + b^{(\ell)},
\end{align}
where $\abs{W^{(\ell)}}$ is an element-wise absolute value of ${W^{(\ell)}}$. \\
For element-wise (monotonic non-decreasing) activation functions $\sigma^{(\ell)}$, the bounds are obtained as
\begin{align}
    \overline{u}^{(\ell)} = \sigma^{(\ell)}(\overline{z}^{(\ell)}), \\
    \underline{u}^{(\ell)} =  \sigma^{(\ell)}(\underline{z}^{(\ell)}).
\end{align}

The bound propagating approach can be extended to general computational graphs, following~\citep{xu2020automatic}. We use the implementation of bound propagation from \texttt{auto\_LiRPA}\footnote{ We use the implementation of bound propagation from \url{https://github.com/Verified-Intelligence/auto_LiRPA}} library, which allows it.

\subsection{Certified Robust Radius}\label{appendix:proofs}
Below, we refresh the definition of certified robust radius from the main paper and restate the theorem~3.2 with a given proof.

\begin{def31}[Certified robust radius]\label{def:certified_robust_radius_supp}
    A bounded certified robust radius $\mathcal{E}(x,y; f_\theta; p)~\in~[0, \varepsilon_{\max}]$ of an input sample $(x, y) \sim \mathcal{D}$ with respect to the classifier $f_\theta$ is the maximum radius of an $\ell_p$ ball $\mathcal{B}(x, \varepsilon)$, not exceeding the predefined (perceptual) limit $\varepsilon_{\max}$, where the classifier is certifiably robust. Formally,
    \begin{equation}\label{eq:def_certified_radius_supp}
        \mathcal{E}_\text{cert}(x,y;f_\theta,p) = \begin{cases}
            0,& \text{if } R_\text{cert}(x, 0) \geq 0,\\
            \varepsilon_{\max}, & \text{if } R_\text{cert}(x, \varepsilon_{\max}) < 0,\\
            \sup\limits_{R_\text{cert}(x, \varepsilon) < 0} \varepsilon, & \text{otherwise}.
        \end{cases}
    \end{equation}
\end{def31}

\begin{thm32}\label{theorem:certified_radius_supp}
    Let $R_\text{cert}(x, \cdot)$ be continuous and strictly monotonically increasing {function}. Then the certified robust radius, defined in Def. 3.1, exists and is uniquely defined as 
    \begin{equation}\label{eq:certified_radius_theorem_supp}
        \mathcal{E}_\text{cert}(x,y;f_\theta,p) = \begin{cases}
            0,& \text{if } R_\text{cert}(x, 0) > 0,\\
            \varepsilon_{\max}, & \text{if } R_\text{cert}(x, \varepsilon_{\max}) < 0,\\
            \varepsilon \text{ s.t. } R_\text{cert}(x, \varepsilon) = 0, & \text{otherwise}.
        \end{cases}
    \end{equation}
\end{thm32}
\begin{proof}
   By intermediate value theorem, continuity of $R_\text{cert}(x, \cdot)$ on a bounded interval $[0, \varepsilon_\text{max}]$ leads to the existence of solution in~\eqref{eq:certified_radius_theorem_supp}. Strict monotonicity ensures the uniqueness of the obtained values of $R_\text{cert}(x, \cdot)$, hence leading to a unique solution in~\eqref{eq:certified_radius_theorem_supp}. 
   
   Now we show that the values $\mathcal{E}_\text{cert}(x,y;f_\theta,p)$ from~\eqref{eq:certified_radius_theorem_supp} are equal to the values in~\eqref{eq:def_certified_radius_supp} for given function $R_\text{cert}(x, \cdot)$. Consider $R_\text{cert}(x, 0) = 0$, then in~\eqref{eq:certified_radius_theorem_supp},~ $\mathcal{E}_\text{cert}(x,y;f_\theta,p)=0$ by the third case, and in~\eqref{eq:def_certified_radius_supp}~ by the first case. Since for  $R_\text{cert}(x, \varepsilon_\text{max}) < 0$, the definitions are the same, let us consider the certified margin function where $R_\text{cert}(x, 0) < 0$, and $R_\text{cert}(x, \varepsilon_\text{max}) \geq 0$. Due to monotonicity, the set of $\varepsilon$, where $R_\text{cert}(x, \varepsilon) < 0$, is a half-open interval $[0, \varepsilon^*)$. And due to continuity, the limit of $\sup_{[0, \varepsilon^*)} \varepsilon$ is reached at $\varepsilon^*$, s.t. $R_\text{cert}(x, \varepsilon^*) = 0$. Because of the strict monotonicity assumption, the set of $\varepsilon$ where $R_\text{cert}(x, \varepsilon) = 0$ is a single point (solution of~\eqref{eq:certified_radius_theorem_supp}), which is equal to $\varepsilon^*$, solution of~\eqref{eq:def_certified_radius_supp}.
\end{proof}

As noted in \textit{Remark} 3.3, we observe that in the trained neural networks the assumptions of continuity and strict monotonicity of IBP-certified $R_\text{cert}(x, \cdot)$ hold in practice. However, the assumption of strict monotonicity can be relaxed with a simple modification in the definition.

\begin{theorem}
\label{thm:relaxed_certified_radius}
     Let $R_\text{cert}(x, \cdot)$ be continuous and monotonically non-decreasing {function}. Then the certified robust radius, defined in Def. 3.1, exists and is uniquely defined as 
    \begin{equation}\label{eq:certified_radius_theorem2}
        \mathcal{E}_\text{cert}(x,y;f_\theta,p) = \begin{cases}
            0,& \text{if } R_\text{cert}(x, 0) > 0,\\
            \varepsilon_{\max}, & \text{if } R_\text{cert}(x, \varepsilon_{\max}) < 0,\\
            \min \varepsilon \text{ s.t. } R_\text{cert}(x, \varepsilon) = 0, & \text{otherwise}.
        \end{cases}
    \end{equation}
\end{theorem}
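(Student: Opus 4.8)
The plan is to follow the structure of the proof of Theorem~3.2, but to replace the singleton-zero argument (which relied on strict monotonicity) by an argument that identifies the supremum of the negativity set with the \emph{minimum} of the zero set. Throughout, write $S = \{\varepsilon \in [0, \varepsilon_{\max}] : R_\text{cert}(x, \varepsilon) < 0\}$ and $Z = \{\varepsilon \in [0, \varepsilon_{\max}] : R_\text{cert}(x, \varepsilon) = 0\}$, and concentrate on the nontrivial ``otherwise'' branch, where $R_\text{cert}(x, 0) \le 0 \le R_\text{cert}(x, \varepsilon_{\max})$.

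First I would establish that $\min Z$ is well defined. By the intermediate value theorem, continuity of $R_\text{cert}(x, \cdot)$ on the compact interval $[0, \varepsilon_{\max}]$ together with $R_\text{cert}(x, 0) \le 0 \le R_\text{cert}(x, \varepsilon_{\max})$ guarantees $Z \neq \emptyset$. Continuity further exhibits $Z$ as the preimage of the closed set $\{0\}$, hence $Z$ is closed; being closed, nonempty and bounded below by $0$, it attains its infimum, so $\min Z$ exists and is a single scalar. This already yields existence and uniqueness of the value produced by~\eqref{eq:certified_radius_theorem2}; note that, in contrast to Theorem~3.2, the uniqueness here is uniqueness of the \emph{output} $\mathcal{E}_\text{cert}$ rather than of the zero itself, since $Z$ may now be a nondegenerate interval.

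The key step is to show that, in the regime $R_\text{cert}(x, 0) < 0 \le R_\text{cert}(x, \varepsilon_{\max})$ (where Definition~3.1 invokes its third case and $S \neq \emptyset$), the value $\min Z$ agrees with Definition~3.1, i.e. that $\varepsilon^* := \sup S = \min Z$. Monotonicity makes $S$ a down-set: if $\varepsilon \in S$ and $\varepsilon' \le \varepsilon$, then $R_\text{cert}(x, \varepsilon') \le R_\text{cert}(x, \varepsilon) < 0$, so $\varepsilon' \in S$; thus $S$ is an interval of the form $[0, \varepsilon^*)$. I would then argue $R_\text{cert}(x, \varepsilon^*) = 0$: choosing $\varepsilon_n \uparrow \varepsilon^*$ with $\varepsilon_n \in S$ gives $R_\text{cert}(x, \varepsilon^*) \le 0$ by continuity, while every $\varepsilon > \varepsilon^*$ lies outside $S$ and hence satisfies $R_\text{cert}(x, \varepsilon) \ge 0$, so letting $\varepsilon \downarrow \varepsilon^*$ yields $R_\text{cert}(x, \varepsilon^*) \ge 0$; the boundary sub-case $\varepsilon^* = \varepsilon_{\max}$ is handled identically using only the one-sided limit and the hypothesis $R_\text{cert}(x, \varepsilon_{\max}) \ge 0$. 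Hence $\varepsilon^* \in Z$. Finally, for any $\varepsilon < \varepsilon^*$ the definition of supremum produces some $\varepsilon'' \in S$ with $\varepsilon < \varepsilon''$, so $R_\text{cert}(x, \varepsilon) \le R_\text{cert}(x, \varepsilon'') < 0$, forcing $\varepsilon \notin Z$; therefore $\varepsilon^*$ is the smallest element of $Z$, i.e. $\varepsilon^* = \min Z$.

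To close, I would reconcile the one boundary case in which Definition~3.1 and~\eqref{eq:certified_radius_theorem2} partition the hypotheses differently, exactly as in the proof of Theorem~3.2: when $R_\text{cert}(x, 0) = 0$, Definition~3.1 returns $0$ through its first case, whereas~\eqref{eq:certified_radius_theorem2} falls into its ``otherwise'' branch and returns $\min Z = 0$, since $0 \in Z$ and $Z \subseteq [0, \varepsilon_{\max}]$; the remaining cases $R_\text{cert}(x, 0) > 0$ and $R_\text{cert}(x, \varepsilon_{\max}) < 0$ match trivially by monotonicity and are mutually exclusive because $R_\text{cert}(x, 0) \le R_\text{cert}(x, \varepsilon_{\max})$. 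I expect the main obstacle to be precisely the identification $\sup S = \min Z$: without strict monotonicity the zero set is generally an interval, so the clean ``the solution'' phrasing of Theorem~3.2 no longer applies, and the argument must carefully combine continuity (to show the supremum is attained as a zero) with monotonicity (to exclude any zero strictly below it).
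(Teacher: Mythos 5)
Your proof is correct and takes essentially the same route as the paper's: both restrict to the nontrivial case $R_\text{cert}(x,0) < 0 \le R_\text{cert}(x,\varepsilon_{\max})$, use monotonicity to identify the negativity set as a half-open interval $[0,\varepsilon^*)$, use continuity to show $R_\text{cert}(x,\varepsilon^*) = 0$, and use monotonicity again to rule out any zero below $\varepsilon^*$, so that $\varepsilon^*$ is the minimum of the zero set. Your write-up is merely more explicit in places (closedness and nonemptiness of the zero set, the sub-case $\varepsilon^* = \varepsilon_{\max}$, and the reconciliation of the boundary case $R_\text{cert}(x,0)=0$ with Definition~3.1), but the underlying argument is the same.
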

\begin{proof}
    Similar to the proof of Theorem~3.2, let us consider the only different case, where certified margin function $R_\text{cert}(x, 0) < 0$, and $R_\text{cert}(x, \varepsilon_\text{max}) \geq 0$. Again, due to monotonicity, the set of $\varepsilon$, where $R_\text{cert}(x, \varepsilon) < 0$, is a half-open interval $[0, \varepsilon^*)$. And due to continuity, the limit of $\sup_{[0, \varepsilon^*)} \varepsilon$ is reached at $\varepsilon^*$, s.t. $R_\text{cert}(x, \varepsilon^*) = 0$. On the other hand, since $R_\text{cert}(x, \cdot)$ is monotonous, the set of $\varepsilon$ s.t. $R_\text{cert}(x, \varepsilon) = 0$ is a closed interval $[a, b]$ (including the case of degenerate interval $[a, a]$). Since $a =  \min \varepsilon \text{ s.t. } R_\text{cert}(x, \varepsilon) = 0$, and due to monotonicity, $R_\text{cert}(x, \varepsilon) < 0$ on $[0, a)$, it follows that $a=\varepsilon^*$.
\end{proof}

Further, we show that the IBP-certified margin function $R_\text{cert}(x, \cdot)$ is continuous and monotonic for feed-forward neural networks $f_\theta$ with monotonic activation functions, such as \texttt{ReLU}, \texttt{tanh}, sigmoid, etc.

\begin{theorem}
\label{thm:monotonic_ibp}
    The IBP-certified margin function $R_\text{cert}(x, \cdot)$ is continuous and monotonically non-decreasing w.r.t. second argument $\varepsilon$ for feed-forward neural networks $f_\theta$, given as~\eqref{eq:feed_forward},\eqref{eq:feed_forward2}, and input bounds $\underline{u}^{(0)} = x - \varepsilon \mathbf{1}$ and $\overline{u}^{(0)} = x + \varepsilon \mathbf{1}$, where $\mathbf{1}$ is a vector of ones.
\end{theorem}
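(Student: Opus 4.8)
The plan is to reduce both claims to a single \emph{nested-interval} invariant that propagates through the network, and then read off monotonicity and continuity of $R_\text{cert}(x,\cdot)$ from the final layer. Concretely, fix $x$ and take $\varepsilon_1 \le \varepsilon_2$. I would prove by induction on the layer index $\ell$ that the IBP boxes are nested coordinate-wise, that is $\underline{u}^{(\ell)}(\varepsilon_2) \le \underline{u}^{(\ell)}(\varepsilon_1) \le \overline{u}^{(\ell)}(\varepsilon_1) \le \overline{u}^{(\ell)}(\varepsilon_2)$, and likewise for the pre-activations $z^{(\ell)}$. The base case is immediate: since $\underline{u}^{(0)} = x - \varepsilon\mathbf{1}$ decreases and $\overline{u}^{(0)} = x + \varepsilon\mathbf{1}$ increases in $\varepsilon$, the input box for $\varepsilon_1$ is contained in the input box for $\varepsilon_2$.

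For the inductive step I would handle the affine and activation maps separately, and here lies the one subtle point. A naive attempt tracks the box as a center plus half-width and hopes the center is fixed while the half-width grows; this is true for the very first affine map but \emph{fails} after the first nonlinearity, because the post-activation midpoint $\tfrac{1}{2}(\sigma(\overline{z})+\sigma(\underline{z}))$ itself moves with $\varepsilon$, so the box can shift rather than merely widen. The clean way around this is to observe that the IBP affine rule is not just an outer bound but the \emph{exact} coordinate-wise image of the input box: choosing $u_k=\overline{u}^{(\ell-1)}_k$ when $W^{(\ell)}_{jk}\ge 0$ and $u_k=\underline{u}^{(\ell-1)}_k$ otherwise reproduces precisely $\overline{z}^{(\ell)}_j = \max_{u\in\text{box}}(W^{(\ell)}u+b^{(\ell)})_j$, and symmetrically for $\underline{z}^{(\ell)}_j$ (this is exactly what the $\abs{W^{(\ell)}}$ term encodes). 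Because a maximum over a larger set can only increase and a minimum can only decrease, nesting of the input boxes forces nesting of the $z^{(\ell)}$ boxes. For the activation layer, applying the element-wise non-decreasing $\sigma^{(\ell)}$ to the nested pre-activation bounds preserves all the inequalities, since $\overline{u}^{(\ell)}=\sigma^{(\ell)}(\overline{z}^{(\ell)})$ and $\underline{u}^{(\ell)}=\sigma^{(\ell)}(\underline{z}^{(\ell)})$ are monotone images. This closes the induction and, applied at $\ell=L$, shows each $\overline{f_\theta}(x,\varepsilon)_i$ is non-decreasing and $\underline{f_\theta}(x,\varepsilon)_y$ is non-increasing in $\varepsilon$. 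Substituting into \eqref{eq:certified_robust_margin}, $R_\text{cert}(x,\varepsilon)=\max_{i\neq y}\overline{f_\theta}(x,\varepsilon)_i - \underline{f_\theta}(x,\varepsilon)_y$ is then a maximum of non-decreasing functions plus a non-decreasing function, hence monotonically non-decreasing.

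Continuity I would obtain by the same layer-by-layer composition argument. The map $\varepsilon \mapsto (\underline{u}^{(0)},\overline{u}^{(0)})$ is affine, hence continuous; the IBP affine formulas are continuous in their inputs (the constant matrix $\abs{W^{(\ell)}}$ enters only through fixed linear combinations of $\overline{u}^{(\ell-1)}$ and $\underline{u}^{(\ell-1)}$); each $\sigma^{(\ell)}$ is continuous by assumption; and a finite maximum together with a difference of continuous functions is continuous. Composing these operations across all $L$ layers yields that $R_\text{cert}(x,\cdot)$ is continuous on $[0,\varepsilon_{\max}]$. The main obstacle is entirely conceptual rather than computational, namely recognizing that the affine IBP step must be treated as an exact image box so that set inclusion, not a center-plus-radius decomposition, drives the monotonicity; once that is in place the remaining steps are routine applications of monotonicity and continuity of the constituent maps.
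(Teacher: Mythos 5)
Your proof is correct and takes essentially the same route as the paper's: a layer-wise induction establishing that the IBP bounds are nested as $\varepsilon$ grows (upper bounds non-decreasing, lower bounds non-increasing), preservation of this nesting by the element-wise monotone activations, and the conclusion that $R_\text{cert}(x,\cdot)$, as a finite maximum of differences of such bounds, is monotonically non-decreasing and continuous by composition. The only real difference is how the affine step is justified: you invoke the fact that the IBP affine rule is the exact coordinate-wise image of the box (so set inclusion yields monotonicity), while the paper verifies the identical fact algebraically by writing $\Delta\overline{z}^{(\ell)}$ and $\Delta\underline{z}^{(\ell)}$ as combinations involving the element-wise non-negative matrix $\tfrac{1}{2}\bigl(W^{(\ell)}+\abs{W^{(\ell)}}\bigr)$ and non-positive matrix $\tfrac{1}{2}\bigl(W^{(\ell)}-\abs{W^{(\ell)}}\bigr)$ --- which is precisely the ``take $\overline{u}_k$ for positive weights, $\underline{u}_k$ for negative weights'' selection written in coordinates, so the two arguments coincide.
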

\begin{proof}
    Let $\varepsilon_1 > \varepsilon_2$. Then, the corresponding input bounds are
    \begin{align*}
        \overline{u}_1^{(0)} = x + \varepsilon_1 \mathbf{1}, \quad \underline{u}_1^{(0)} = x - \varepsilon_1 \mathbf{1}, \\
        \overline{u}_2^{(0)} = x + \varepsilon_2 \mathbf{1}, \quad \underline{u}_2^{(0)} = x - \varepsilon_2 \mathbf{1}.
    \end{align*}
    Let us define 
    \begin{align*}
        \Delta \overline{u}^{(\ell)} := \overline{u}_1^{(\ell)} - \overline{u}_2^{(\ell)}, \quad \Delta \underline{u}^{(\ell)} := \underline{u}_1^{(\ell)} - \underline{u}_2^{(\ell)} \\
        \Delta \overline{z}^{(\ell)} := \overline{z}_1^{(\ell)} - \overline{z}_2^{(\ell)}, \quad \Delta \underline{z}^{(\ell)} := \underline{z}_1^{(\ell)} - \underline{z}_2^{(\ell)}.
    \end{align*}
    We observe that for the IBP-defined pre-activation bounds
    \begin{align}
        \Delta \overline{z}^{(\ell)} =
 \left(W^{(\ell)}+\abs{W^{(\ell)}}\right)\frac{\Delta \overline{u}^{(\ell-1)}}{2} + \left(W^{(\ell)}-\abs{W^{(\ell)}}\right)\frac{\Delta \underline{u}^{(\ell - 1)}}{2}\geq 0, \label{eq:delta_preactivation1}  \\
        \Delta \underline{z}^{(\ell)} = 
 \left(W^{(\ell)}-\abs{W^{(\ell)}}\right)\frac{\Delta \overline{u}^{(\ell-1)}}{2} + \left(W^{(\ell)}+\abs{W^{(\ell)}}\right)\frac{\Delta \underline{u}^{(\ell - 1)}}{2}\leq 0. \label{eq:delta_preactivation2}
    \end{align}
    That is, the upper bounds of pre-activations are monotonically increasing, and the lower bounds are decreasing w.r.t. $\varepsilon$ with the preceding post-activation bounds $\overline{u}_1^{(\ell - 1)} \geq \overline{u}_2^{(\ell - 1)}$,~ $\underline{u}_1^{(\ell - 1)} \leq \underline{u}_2^{(\ell - 1)}.$ Continuity holds since the transformations are affine. Next, the monotonicity of post-activation bounds remains the same as of the pre-activation bounds, since the activations  $\sigma^{(\ell)}$ are monotonically increasing functions, and continuity follows from continuity of $\sigma^{(\ell)}$. Thus, recursively, the upper bounds of outputs $\overline{z}^{(L)}=\overline{z}^{(L)}(x, \varepsilon)$ are monotonically increasing w.r.t. $\varepsilon$, and the lower bounds $\underline{z}^{(L)}=\underline{z}^{(L)}(x, \varepsilon)$ are monotonically decreasing. Since the certified margin function is a maximum over a difference of increasing and decreasing functions, i.e. $R_\text{cert}(x,\varepsilon)~=~\max_{i \neq y}\{\overline{z}^{(L)}(x, \varepsilon)_i~-~\underline{z}^{(L)}(x, \varepsilon)_y \}$, it is monotonically increasing (or equivalently, non-decreasing) w.r.t. $\varepsilon$ and continuous.
\end{proof}

\begin{remark}
    The relaxed assumptions from Theorem~\ref{thm:relaxed_certified_radius}, and the properties of IBP-certified neural networks from Theorem~\ref{thm:monotonic_ibp} allow us to define the certified robust radius for feed-forward neural networks. Moreover, the definition in Theorem~~\ref{thm:relaxed_certified_radius} requires only a small modification in root finding algorithms to output the leftmost root, similar to \texttt{bisect\_left}. Therefore, it allows us to efficiently find the exact certified radii in practice.
\end{remark}

\begin{remark}
    The assumption of strict monotonicity of $R_\text{cert}(x, \cdot)$ holds if the activation functions~$\sigma^{(\ell)}$ are strictly increasing, e.g. sigmoid and \texttt{tanh}, and the inequalities in~\eqref{eq:delta_preactivation1} and \eqref{eq:delta_preactivation2} are strict.
\end{remark}

To summarize, the definition of certified robust radius from Theorem~3.2 of the main paper allows simplifying the presentation of algorithms while requiring strict monotonicity property of $R_\text{cert}(x, \cdot)$. However, a more general definition from Theorem~\ref{thm:relaxed_certified_radius} does not require strict monotonicity and suits a large class of neural networks.

\section{Experimental Results}\label{appendix:experimental_details}

\begin{table}[tb]
\caption{Convolutional network architecture from~\citep{shi2021fast}. Each affine layer (except the last) is followed by batch normalization and ReLU activation. "\texttt{Conv2D} $k$ $w$$\times$$h$+$s$" represents a 2D convolutional layer with $k$ filters of size $w$$\times$$h$ and stride $s$. }
\label{tab:cnn7}
\centering
\begin{tabular}{@{}c@{}}
\toprule
\texttt{CNN7} \\
\midrule
\texttt{Conv2D} 64 3$\times$3+1 \\
% \texttt{BN2D} 64 \\
\texttt{Conv2D} 64 3$\times$3+1 \\
% \texttt{BN2D} 64 \\
\texttt{Conv2D} 128 3$\times$3+2 \\
% \texttt{BN2D} 128 \\
\texttt{Conv2D} 128 3$\times$3+1 \\
% \texttt{BN2D} 128 \\
\texttt{Conv2D} 128 3$\times$3+1 \\
% \texttt{BN2D} 128 \\
\texttt{Linear} 512 \\
% \texttt{BN1D} 512 \\
\texttt{Linear} out\_size \\
\bottomrule
\end{tabular}
\end{table}

% \subsection{Experimental Setup Details}
\paragraph{Hyperparameters} To reproduce the results of the baseline method, FastIBP~\citep{shi2021fast}, we follow the experimental setup from the authors' implementation\footnote{To reproduce the results of the baseline method, FastIBP, we follow the authors' implementation \url{https://github.com/shizhouxing/Fast-Certified-Robust-Training}}.

For the proposed method, ACERT, we use the same setup as in FastIBP, i.e. 7-layered convolutional network \texttt{CNN7} (see Table~\ref{tab:cnn7}), IBP initialization, batch sizes, total numbers of epochs, warm-up schedulers, ReLU stability and bound tightness regularizers, gradient clipping parameters, and weight decays. Find the above hyperparameters in Table~\ref{tab:hyperparams}.

We find that training of our proposed method benefits from a cyclic learning rate scheduler, while FastIBP does not. Therefore, we use \texttt{OneCycleLR} with linear annealing strategy and maximum learning rate of \texttt{2e-3} for all datasets. Please find more details of our implementation in the provided code.

\begin{table}[tb]
\caption{Experimental setups used in ACERT.}
\label{tab:hyperparams}
\centering
\begin{tabular}{@{}lccc@{}}
\toprule
& MNIST & CIFAR-10 & TinyImageNet \\
\midrule
Model & \texttt{CNN7} & \texttt{CNN7} & \texttt{CNN7} \\
Initialization & IBP & IBP & IBP \\
$\varepsilon_{\text{max}}$ & 0.4 & $\nicefrac{8.8}{255}$ & $\nicefrac{1}{255}$ \\
Batch size & 128 & 128 & 256 \\
Num epochs & 100 & 200 & 200 \\
Warm-up schedules & 1-20 & 1-80 & 1-80 \\
Reg. coefficient & 0.5 & 0.5 & 0.2 \\
$L_1$ weight decay & 0.0 & 0.0 & \texttt{1e-4} \\
Grad. clipping norm & 10.0 & 10.0 & 10.0 \\
LR scheduler & \texttt{OneCycle} & \texttt{OneCycle} & \texttt{OneCycle} \\
\texttt{max\_lr} & \texttt{2e-3} & \texttt{2e-3} & \texttt{2e-3} \\
\bottomrule
\end{tabular}
\end{table}

% \subsection{Extended Results}
\paragraph{Error bars} We provide averaged results of our proposed method, ACERT, over multiple runs with different seeds on the CIFAR-10 dataset with $\varepsilon_\text{max}~=~\nicefrac{8.8}{255}$ in Table~\ref{tab:error_bars}. The conclusions remain the same as in the main paper, where the (best) single runs are presented for all methods.

\begin{table}[tb]
\caption{Means and standard deviations over 7 runs on CIFAR-10 with  $\varepsilon_\text{max}~=~\nicefrac{8.8}{255}$.}
\label{tab:error_bars}
\centering
\begin{tabular}{@{}llllll@{}}
\toprule
Method  & $\kappa$ &   Acc   &  ACR   &   ART\\
\midrule  
ACERT & 0.0 &  61.64$\pm$0.34 &  41.79$\pm$0.19   &   50.74$\pm$0.13\\
ACERT & 0.25  &  64.35$\pm$0.16   &   40.48$\pm$0.32   &  \textbf{51.02}$\pm$0.24  \\
\bottomrule
\end{tabular}
\end{table}

\paragraph{Running times}
The experiments were all run on a single Tesla V-100 GPU. The implementations of both bound propagation and root finding algorithms are within PyTorch~\citep{paszke2019pytorch} framework, allowing GPU acceleration.

We report the running times of the proposed method and its variations in comparison to the baseline method, FastIBP, in Table~\ref{tab:running_times}. The total runtime includes both training and validation epochs. For FastIBP, the validation step also computes certified radii. We observe that only two iterations of the root finding algorithm, BrentQ~\citep{brent1973algorithmsfor}, are sufficient to achieve a high ART score. Therefore, making the runtime of our method comparable to FastIBP.

\begin{table}[tb]
\caption{Comparison of variations of the proposed method, ACERT, with different running times against the baseline on CIFAR-10 with $\varepsilon_\text{max}~=~\nicefrac{8.8}{255}$. All models are trained with the percentage of standard loss $\kappa=0$.}
\label{tab:running_times}
\centering
\begin{tabular}{@{}lllllccc@{}}
\toprule
Method  & \makecell{Max \\BrentQ\\ iters} &  Acc   &  ACR   &   ART & \makecell{Training \\ step, sec} & \makecell{Total \\ runtime, h}\\
\midrule  
FastIBP & 0 &  48.94 &  41.23   &   44.91 & 0.02 & 6.1  \\
ACERT-fast & 0 &  54.35 &  42.37   &   47.92 &  0.02 & 6.1  \\
ACERT & 2 &  61.82 &  41.66   &   50.70 &  0.07 & 7.2  \\
ACERT & 100 &  61.64 &  41.79   &   \textbf{50.74} & 0.22 & 10.0  \\
% ACERT & 0.25  &  64.35   &   40.48   &  \textbf{51.02} & 100  & 0.23  &  10.2 \\
\bottomrule
\end{tabular}
\end{table}

An extreme variation of ACERT with 0 root finding iterations, ACERT-fast (given in the second row of Table~\ref{tab:running_times}), employs the following strategy. For misclassified samples, it sets the certified radius to 0, and for correctly classified ones -- to $\varepsilon_{\text{max}, t}$. So, in contrast to FastIBP, which sets $\varepsilon_{\text{max}, t}$ for all samples, this variation adapts to the classification result. Since it does not compute the exact certified radii, its runtime is the same as for FastIBP. Interestingly, this zero-iteration adaptation already improves the accuracy and average certified radius over the baseline but does not reach the level of ACERT.

\begin{figure}[tb]
\centering
\begin{tabular}{cc}
\includegraphics[width=0.44\linewidth]{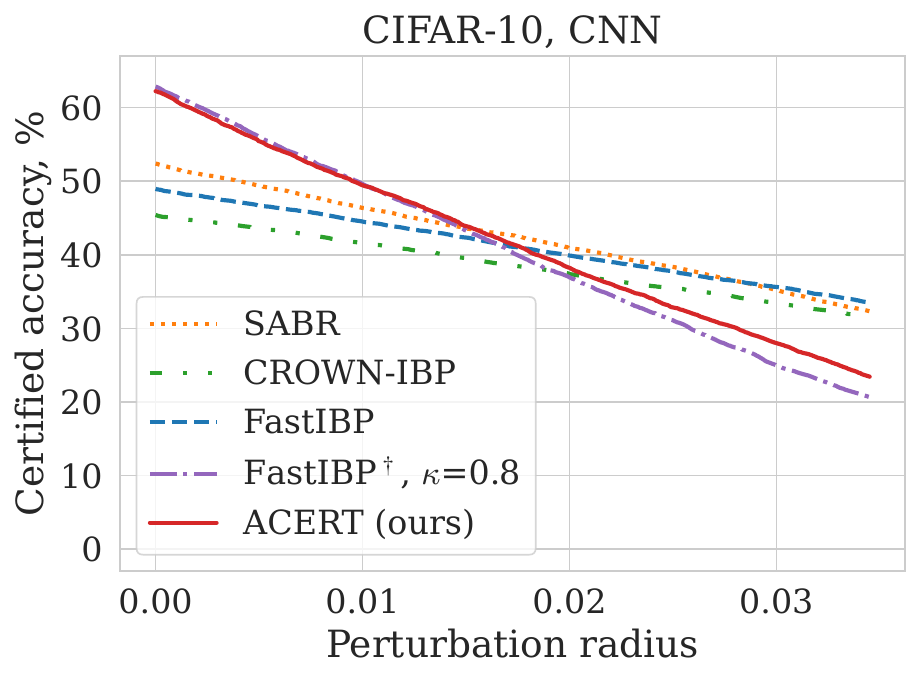} &
\includegraphics[width=0.44\linewidth]{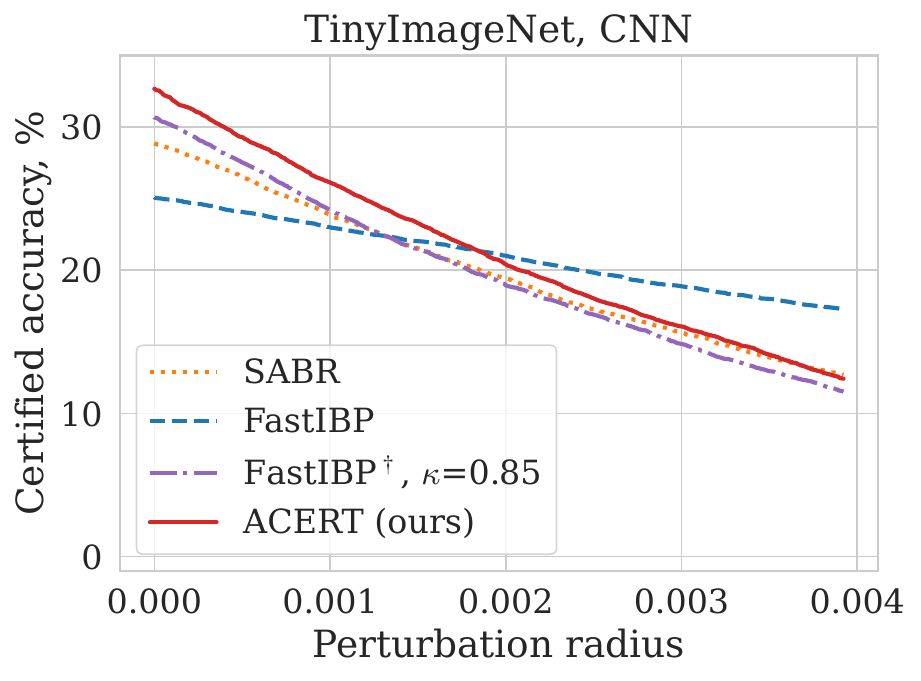} 
\end{tabular}
\caption{Certified robust accuracies on different levels of $\ell_\infty$ perturbation radii. The area under a curve is equal to the (unnormalized) average certified radius (ACR) of the method. By default, models with a percentage of standard loss $\kappa=0$ are demonstrated.}
\label{fig:certified_robustness_curves}
\end{figure}

\begin{table}[tb]
\caption{Comparison of certified robust accuracies (Cert. Acc.) at commonly used fixed levels of $\ell_\infty$ perturbation radii $\varepsilon_\text{test}$. The certificates are obtained using IBP.}
\label{tab:robust_accuracies_results}
\centering
\begin{tabular}{@{}lllllll@{}}
\toprule
Dataset & $\varepsilon_{\text{test}}$ & Method  & $\kappa$ &  Acc   &  \makecell{Cert. \\ Acc.}  \\
\midrule  
\multirow{2}*{MNIST} &  \multirow{2}*{0.3}  &    FastIBP & 0.0 &  97.61 &  93.13     \\
&   &     {ACERT} (ours) & 0.0  &   {98.42}   &   {93.20}    \\
\midrule
\multirow{4}*{CIFAR-10} &  \multirow{4}*{$\dfrac{8}{255}$}  &   FastIBP  &   0.0     & {48.94}   &   {34.97}      \\
&   &     FastIBP$^\dagger$ & 0.8 &    {62.83}   &   23.71      \\
&   &     ACERT (ours) & 0.0 &    {62.21}   &   26.61      \\
&   &     ACERT$^\dagger$ (ours) & 0.25 &    {64.63}   &   24.41      \\
\midrule
\multirow{3}*{TinyImageNet} &  \multirow{3}*{$\dfrac{1}{255}$}  &   FastIBP & 0.0 & 25.08  &   {17.29}      \\
&   &     FastIBP$^\dagger$ & 0.85   &   {30.67}   &   {11.51}       \\
&   &     ACERT$^\dagger$ (ours) & 0.0   &   {32.66}   &   {12.42}       \\
\bottomrule
\end{tabular}

$^\dagger$denotes results with the highest ART score among different levels of $\kappa \in [0, 1]$.
\end{table}

\section{Limitations and Discussion}
To get a closer look at individual performances, we evaluate certifiably trained models by calculating certified robust accuracies at different levels of $\ell_\infty$ perturbation radii, given in Figure~\ref{fig:certified_robustness_curves} and Table~\ref{tab:robust_accuracies_results}. We observe in Fig.~\ref{fig:certified_robustness_curves} that our method has higher robustness at smaller perturbation intensities and finishes with lower certified accuracy at commonly defined large $\varepsilon_\text{test}$ (Table~\ref{tab:robust_accuracies_results}). This limitation of our method is related to accuracy-robustness tradeoffs, discussed in Section~\ref{subsec:main_results} and demonstrated in Fig.~\ref{fig:accuracy_robustness_curves}. To illustrate it, we tune the baseline method, FastIBP, to have higher standard accuracy (at the level of ACERT), and see that its certified accuracy at large $\varepsilon_\text{test}$ drops below ACERT's.

In the field of certified training, researchers have mostly focused on achieving high certified accuracy at specific large epsilon values, even if it means sacrificing standard accuracy and certified accuracy at smaller epsilon ranges. However, this often leads to models that are not practical in real-world scenarios. Moreover, there is a concern that the advancements in the field could be attributed to selectively improving performance points on the same accuracy-robustness curve. Therefore, we argue that the evaluation should focus on optimizing the entire tradeoff. To achieve this, we propose to use the ($\kappa$-tuned) ART score~\eqref{eq:ART} as a useful proxy metric, indicating a method's Pareto front.

\section{Broader Impact}
Since the proposed method bridges the gap between practical usability and safety of the trained models, it {can} help make real-world AI applications more resilient to adversaries, providing robustness guarantees. While ACERT achieves state-of-the-art accuracy-robustness tradeoffs, it does not necessarily indicate sufficient robustness for safety-critical applications but could give a false sense of security.

\end{document}